%
\documentclass[runningheads]{llncs}
\usepackage[T1]{fontenc}
%
\usepackage{graphicx}
%

\usepackage{algorithm}
\usepackage{algorithmic}
\usepackage[colorinlistoftodos]{todonotes}
\usepackage{amsmath}
\usepackage{dsfont}
\usepackage{amsfonts}
\usepackage{comment}
\usepackage[]{hyperref}
\usepackage[backend=biber,style=numeric, sorting=none]{biblatex}
\addbibresource{biblio.bib}

\begin{document}
\title{Hierarchical Variable Importance with Statistical Control for Medical Data-Based Prediction
}

\titlerunning{Hierarchical CPI}
%
\author{Joseph Paillard\inst{1, 2} \and Antoine Collas\inst{2} \and Denis A. Engemann\inst{1} \and \\ Bertrand Thirion\inst{2}}

%
\authorrunning{J. Paillard et al.}
%
\institute{Roche Pharma Research \& Early Development, F. Hoffmann-La Roche Ltd, \\ Basel, Switzerland \\ 
\and Université Paris-Saclay, Inria, CEA, Paris, Palaiseau, France \\
Correspondance: \email{joseph.paillard@roche.com}\\
}
\maketitle              
\begin{abstract}
%
Recent advances in machine learning have greatly expanded the repertoire of predictive methods for medical imaging.
However, the interpretability of complex models remains a challenge, which limits their utility in medical applications.  
Recently, model-agnostic methods have been proposed to measure conditional variable importance and accommodate complex non-linear models. 
However, they often lack power when dealing with highly correlated data, a common problem in medical imaging.
We introduce Hierarchical-CPI, a model-agnostic variable importance measure that frames the inference problem as the discovery of groups of variables that are jointly predictive of the outcome.
By exploring subgroups along a hierarchical tree, it remains computationally tractable, yet also enjoys explicit family-wise error rate control.
%
%
%
Moreover, we address the issue of vanishing conditional importance under high correlation with a tree-based importance allocation mechanism.
We benchmarked Hierarchical-CPI against state-of-the-art variable importance methods. 
Its effectiveness is demonstrated in two neuroimaging datasets: classifying dementia diagnoses from MRI data (ADNI dataset) and analyzing the Berger effect on EEG data (TDBRAIN dataset), identifying biologically plausible variables.

\keywords{Statistics, neuroimaging, interpretable machine learning}
\end{abstract}
\section{Introduction}
Within the field of medcial imaging, machine learning holds great promise to facilitate prediction of clinical outcomes, see e.g. \cite{wen_2020_convolutional,tosun2024identifying,gemein_2020_machine, cuingnet_2011_automatic, abraham_2014_machine,tousignant2019prediction}. 
However, these advances have also opened major interpretability challenges.
A key issue is how to infer the importance of features from prediction models going beyond ordinary least squares to accommodate a large number of predictors and represent non-linear associations between features and outcomes. 
Therefore, developing methods to measure variable importance in a model-agnostic manner is critical in order to obtain clinical insights and develop biomarkers,  
%
%
for instance, using brain images for the diagnosis of Alzheimer Disease (AD) based on existing cohorts, 
or data from clinical trials \cite{petersen_2010_alzheimer, samper_2018_reproducible,tosun2024identifying,tousignant2019prediction}.
%
%
However, to develop trustworthy methods, it is essential to understand their theoretical guarantees, particularly concerning the risk of making false discoveries, which can be captured by the Family-Wise Error Rate (FWER) (see e.g. \cite{mandozzi_2016_hierarchical, Chevalier_2018_statistical}). 
Only few variable importance methods give access to such guarantees.
Moreover, we focus here on \emph{conditional importance}, meaning the importance measure whether a variable is \emph{directly} predictive of the outcome, without being explained away by other variables \cite{sobol_2001_Global,chamma_2024_variable}. 
Such conditional importance is needed to establish that a marker carries independent information about the outcome,  rather than merely reflecting distributed factors that are also present in other variables.
Conditional importance analysis is particularly difficult in datasets that exhibit strong correlation structures such as image- or genomics-based biomarkers, or health data that reflect common latent factors \cite{chevalier_2021_decoding}.
%

We assess the face validity of the approach with two tasks that have been extensively studied in the literature---the effect of AD on structural MRI and the Berger effect on electroencephalography (EEG)---to allow for a form of confirmation, addressing the challenge posed by the absence of ground truth in variable importance methods.


\subsection{Related Work}
This work focuses \emph{global} variable importance, as opposed to local variable importance methods such as \textit{LIME} \cite{Ribeiro_2016_why} or \textit{SHAP} \cite{Strumbelj_2014_explaining}.
Global variable importance is estimated using methods such as global sensitivity analysis \cite{sobol_2001_Global} or the popular Leave One Covariate Out (LOCO) approach \cite{Toshimitsu_1996_importance,Williamson_2023_general}.
These methods can accommodate different types of learners, taking advantage of advances in machine learning to measure importance in complex and nonlinear models \cite{Williamson_2023_general, verdinelli_2024_feature}.
Similarly, conditional permutation approaches have been shown to estimate a quantity equivalent to LOCO at a cheaper computational cost and with a faster convergence rate \cite{chamma_2024_statistically}.
These methods have in common to provide a good control of the type-1 error rate, that is considering a null variable (or group) as important.  
However, all approaches suffer from an inherent limitation: conditional importance decreases as correlation increases.
For instance, considering two random normal variables $X_1, X_2$ with correlation $\rho$  in a simple linear model $y=\beta_1 X_1 + \beta_2 X_2$ the importance of $X_1$ decreases proportionally to $(1-\rho^2)$.
%

To mitigate this issue, methods based on variables grouping have been proposed to identify groups of highly-correlated, hence indistinguishable variables that predict the outcome \cite{chamma_2024_variable, chevalier_2021_decoding}.
Variable grouping can be performed based on prior knowledge about the data or by using clustering techniques.
While this effectively increases the statistical power by averaging correlated variables, it also reduces the precision in the sense that error control only holds at the group level.
When performing the grouping in a data-driven way, choosing the clustering scheme and parameters has a critical impact yet has no obvious solution. 
%
%
A line of work relying on linear models and agglomerative clustering has been proposed in that direction with applications to genomics data \cite{mandozzi_2016_hierarchical, binh_2019_ecko, Chevalier_2018_statistical}. 
Agglomerative clustering offers a compelling solution as it naturally explores different groupings at various resolutions along the hierarchical tree learned from the data.
However, this approach relied on Lasso regression
and would consequently limit the user in the choice of the model used to predict the outcome of interest from the variables. 

Another popular model explainability approach is Shapley Additive Global importancE (\textit{SAGE}) \cite{covert_2020_sage}. 
Based on Shapley values, this approach estimates an importance score for a given variable by conditioning on all subgroups that include this variable.
%
This procedure provides a more nuanced view than strict conditional importance, because it decomposes additively the variance explained by the model into variable importance. 
However, it suffers from two main limitations. 
%
%
%
First, as an aggregated statistic, it obcures the role of variables in the prediction function \cite{verdinelli_2024_feature}, and is unable to distinguish between a predictive variable and another, non-predictive variable yet correlated with a predictive one.
%
%
Second, the exploration of all submodels comes with an exponential explosion of computation cost, making this approach intractable. 
While implementations rely on Monte-Carlo sampling instead of exhausting the full combinatorial sum, the number of sampling steps needed to obtain accurate estimates still leads to intractable computation costs \cite{verdinelli_2024_feature}. 
This two limitations are clearly visible in our experiments in \autoref{fig:simulation}. 

Our contributions are \textit{i)} to introduce Hierarchical-CPI, a model-agnostic variable importance measure that improves FWER control; 
it explores subgroups in a tree-guided manner, using agglomerative clustering to provide more information than variable-level importance while remaining tractable;
\textit{ii)} to present an approach that enforces importance conservation through downstream importance allocation strategy, addressing the issue of vanishing importance under high correlation. 

\section{Methods}

\paragraph{Notations:} We denote $X$ as the variables, $y$ as the outcome, and $\mu$ as the predictive model. 
$X_G$ represents the set of variables belonging to a group $G$, and $X_{-G}$ denotes the set of variables in the complement of $G$.  
The importance of a group is denoted as $\psi_G$. We use $S^*$ to denote the support (or set of active variables) and $S_0$ for the set of null variables.  
In the hierarchical tree defined by the clustering, $P$ refers to a parent node, and $L$ and $R$ refer to its left and right child nodes, respectively. 

\subsection{Hierarchical-CPI}
We present a method for measuring variable importance while conditioning on others, with conditioning sets taken in a tree-organized hieracrchical representation of the variables. 
It balances \emph{precision} and \emph{statistical power}; precision refers to extracting the information located in groups of variables that are as small as possible;  statistical power is achieved by considering condition sets different from the set of all variables. 
\begin{algorithm}[h]
    \caption{Hierarchical CPI}
    \label{alg:hcpi}
    \textbf{Input}: $K$: number of folds, $\mu$: predictive model, $\nu$: imputation model, $(X, y)$: data\\
    \begin{algorithmic}[1] 
    \STATE tree $\gets$ Fit hierarchical clustering on $X$ \\
    \FOR{k in $[1, \cdots, K]$} 
        \STATE $\hat \mu_k \gets$ Fit using $(X_{train}, y_{train})$ // fit the full model\\
        \FOR{node in tree} 
            \STATE G $\gets$ traversal(node)   // search variables belonging to the node\\
            \STATE $\hat{\nu}^k_G \gets \mathbb{E}[X^{train}_{G}| X^{train}_{-G}]$ // estimate the conditional distribution \\
            \STATE $\widetilde{X}_G^{test} \sim \hat{\nu}^k_G (X_{-G}^{test})$   // sample from the conditional distribution
            \STATE $\hat\psi^k_G \gets \mathcal{L}\left(y, \mu(\widetilde{X}^{test}_G)\right) - \mathcal{L}\left(y, \mu(X^{test})\right)$   // compute variable importance \\
        \ENDFOR
    \ENDFOR
\STATE $p_G$ $\gets$ t-test($\hat\psi^1_G, \cdots, \hat\psi^K_G$)   // compute p-value over folds\\
\STATE $p_G^h$ $\gets$ $\max_{G	\subseteq D} p_D^h$   // hierarchical adjustment
    \STATE $\textbf{return} \; p^h_{G_i}$ for $i=1, \cdots, 2p-1$
\end{algorithmic}
\end{algorithm}
The proposed method builds on top of Conditional Permutation Importance (CPI) \cite{chamma_2024_variable} which, given a group of variables $G$, a model $\mu$ and loss $\mathcal{L}$ estimates the conditional importance
\begin{equation}
\label{eq:CPI}
    \psi_G= \mathcal{L}\left(y, \mu(\tilde{X}_G)\right) - \mathcal{L} \left(y, \mu(X) \right),
\end{equation}
where $\tilde{X}_G$ is obtained by substituting into the group $X_{G}$ variables sampled from the conditional distribution $(X_{G}|X_{-G})$ and leaving the $X_{-G}$ variables unchanged.
In brief, CPI quantifies the loss increase when conditioning on all other variables than those in  $G$.
This approach estimates the well known total Sobol index \cite{sobol_2001_Global}.
In addition, hierarchical-CPI leverages Ward’s minimum variance method for agglomerative clustering to learn the hierarchical group structure \cite{ward_1963_hierarchical}.
The proposed method is presented in Algorithm \autoref{alg:hcpi}, for a problem with $p$ variables, it consists in  estimating the conditional permutation importance  of each group of variables within the hierarchical structure.
Empirical importance values are obtained in a K-fold cross-validation scheme, yielding $K$ estimates per group,  ($\psi_G^1, \cdots,\psi_G^K$).
A p-value $p_G$ is then derived based on a one-sample t-test.
Finally, the node-level p-values $p_G$ are hierarchically adjusted by,
\begin{align}
\label{eq:hierarchical_adjust}
    p_G^h = \max_{G\subseteq D} p_D,
\end{align}
to enforce that the p-value of a node is larger than the p-value of its parent.

\subsection{Hierarchical CPI achieves FWER control}
In this section, we demonstrate that the hierarchical-CPI approach controls the FWER under assumptions of estimator optimality and regularity.
The assumptions (A.1, A.2, B.1, B.2) from \cite{Williamson_2023_general} stipulate that the estimator $\mu$ must be optimal and exhibit sufficient regularity. 
These assumptions have been validated by independent work and are considered not too restrictive \cite{verdinelli_2024_feature, Williamson_2023_general, chamma_2024_statistically}.
We refer to a tree cut as a set of non-overlapping nodes within a hierarchical tree.
Let $S_0$ denote the set of groups that only contain null variables, and let $\hat S_\alpha = \{G \; | \; p_G \leq \alpha\}$ be the estimated set of active variables for a given significance level $\alpha \in [0, 1]$. 
The following result holds.
\begin{theorem}
Under the assumption that the conditions (A.1, A.2, B.1, B.2) stated in \cite{Williamson_2023_general} on $\mu$, for any significance level $\alpha \in [0, 1]$ the multiplicity corrected p-values $\tilde p_{G}^h = \min(1,C\cdot p^h_{G})$, with $C=p$ control the family-wise error rate at level $\alpha$, i.e. $\mathbb{P}(S_0 \cap \hat S_\alpha \neq \emptyset) \leq \alpha$. Where G is a node of a tree cut. 
\end{theorem}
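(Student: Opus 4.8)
The plan is to reduce the family-wise statement to three ingredients and combine them by a union bound: (i) the marginal validity of each group-level p-value under the null, inherited from the inferential guarantees attached to assumptions (A.1, A.2, B.1, B.2) in \cite{Williamson_2023_general}; (ii) the monotonicity of the hierarchical adjustment $p_G^h = \max_{G \subseteq D} p_D$, which can only enlarge p-values and hence cannot inflate the type-I error; and (iii) a Bonferroni correction over a tree cut, whose cardinality is at most $p$ because its nodes are pairwise non-overlapping. The calibration factor $C = p$ is precisely the worst-case size of a cut, which is why it suffices in place of the total node count $2p-1$.

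First I would fix a null group $G \in S_0$, so that its true conditional importance satisfies $\psi_G = 0$. By construction $\hat\psi_G^1, \dots, \hat\psi_G^K$ are cross-fitted estimates of $\psi_G$; under the optimality and regularity hypotheses these are asymptotically Gaussian and centered at $\psi_G = 0$, so that the one-sample t-test producing $p_G$ is (asymptotically) calibrated under the null, i.e. $\mathbb{P}(p_G \leq u) \leq u$ for every $u \in [0,1]$. This is the step that consumes the estimator assumptions, and I would cite the asymptotic normality of \cite{Williamson_2023_general} together with the CPI calibration of \cite{chamma_2024_variable, chamma_2024_statistically} to conclude super-uniformity (or at worst conservativeness) of the null p-values.

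Next, because the adjustment takes a maximum over all ancestors $D$ with $G \subseteq D$ (including $D = G$), we always have $p_G^h \geq p_G$, so $\{p_G^h \leq u\} \subseteq \{p_G \leq u\}$ and the adjusted null p-values remain super-uniform: $\mathbb{P}(p_G^h \leq u) \leq u$. Thus the hierarchical step preserves the null guarantee while enforcing nestedness of rejections. It then remains to apply the union bound over the cut $\mathcal{C}$: each node spans a non-empty set of leaves and the nodes are pairwise disjoint, so $|\mathcal{C}| \leq p$ and in particular $|S_0 \cap \mathcal{C}| \leq p$. Since $\tilde p_G^h = \min(1, p\cdot p_G^h)$, for $\alpha < 1$ the event $\{\tilde p_G^h \leq \alpha\}$ coincides with $\{p_G^h \leq \alpha/p\}$, whence $\mathbb{P}(\tilde p_G^h \leq \alpha) \leq \alpha/p$. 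Combining, $\mathbb{P}(S_0 \cap \hat S_\alpha \neq \emptyset) \leq \sum_{G \in S_0 \cap \mathcal{C}} \mathbb{P}(\tilde p_G^h \leq \alpha) \leq p\cdot(\alpha/p) = \alpha$, the case $\alpha = 1$ being trivial. A useful feature here is that the union bound requires no independence, which matters because the group statistics along the tree are strongly dependent.

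The hard part will be step one: rigorously turning the asymptotic normality of \cite{Williamson_2023_general} into a clean super-uniformity statement for the cross-fitted t-test, and in particular handling the degenerate regime where the limiting variance collapses exactly at $\psi_G = 0$, since there the naive t-statistic need not be asymptotically standard normal. I expect this to be addressable by appealing to the conservativeness of the test under the null established in the cited works, but it is the one place where the claim genuinely rests on the estimator hypotheses rather than on elementary probability. By contrast, the monotonicity argument and the combinatorial bound $|\mathcal{C}| \leq p$ that justifies $C = p$ are routine.
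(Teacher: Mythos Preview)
Your proposal is correct and follows essentially the same route as the paper: super-uniformity of the null $p_G$ (delegated to the cited guarantees), monotonicity of the hierarchical adjustment $p_G^h \geq p_G$, the combinatorial bound $|\mathcal{C}|\leq p$ on any tree cut, and a union bound to conclude. Your write-up is in fact more explicit than the paper's about where the estimator assumptions are consumed and about the degenerate-variance subtlety at $\psi_G=0$, which the paper handles by direct citation of \cite{chamma_2024_statistically}.
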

\begin{proof}
\begin{align*}
     \mathbb{P}(S_0 \cap \hat{S}_\alpha \neq \emptyset) &= \mathbb{P}\left( \min_{G \subseteq S_0} { \tilde{p}_G^h \leq \alpha } \right) = \mathbb{P}\left( \bigcup_{G \subseteq S_0} p_G^h \leq \frac{\alpha}{C}\right)
\end{align*}
Then, given Boole's inequality, 
\begin{align*}
     \mathbb{P}(S_0 \cap \hat{S}_\alpha \neq \emptyset) & \leq\sum_{G \subset S_0} \mathbb{P}\left(p_G^h \leq \frac{\alpha}{C} \right) \\
     &\leq C\cdot \mathbb{P}\left(p_G^h \leq \frac{\alpha}{C} \right)
\end{align*}
Since a tree cut contains less than $p$ nodes, $\text{card}(\{G_i \;|\; G_i \subseteq S_0 \}_{i\in[1, C]}) \leq C$.\\
Furthermore, given that $p_G^h = \max_{G \subseteq D}\{ p_D \}$, where the maximum is taken over ancestor nodes, it comes that $\mathbb{P}\left( p_G^h \leq \alpha \right) \leq \mathbb{P}\left( p_G \leq \alpha \right)$. 
Finally, under assumptions (A.1, A.2, B.1, B.2), it has been shown in \cite{chamma_2024_statistically} that, $
\forall G \subset S_0, \;  \mathbb{P}\left( p_G \leq \alpha \right) \leq \alpha$. 
We then have, $\mathbb{P}(S_0 \cap \hat{S}_\alpha \neq \emptyset) \leq \alpha$
which completes the proof.
\end{proof}
While this result holds when considering inference at the variable level, it is more general and applies to any node of the tree. 
Hierarchical CPI allows to learn a tree structure from the data and make inference at different levels. 

\subsection{Importance conservation to prevent importance vanishing}
\begin{figure}[h!]
\label{subsec:importance_conservation}
\includegraphics[width=\textwidth]{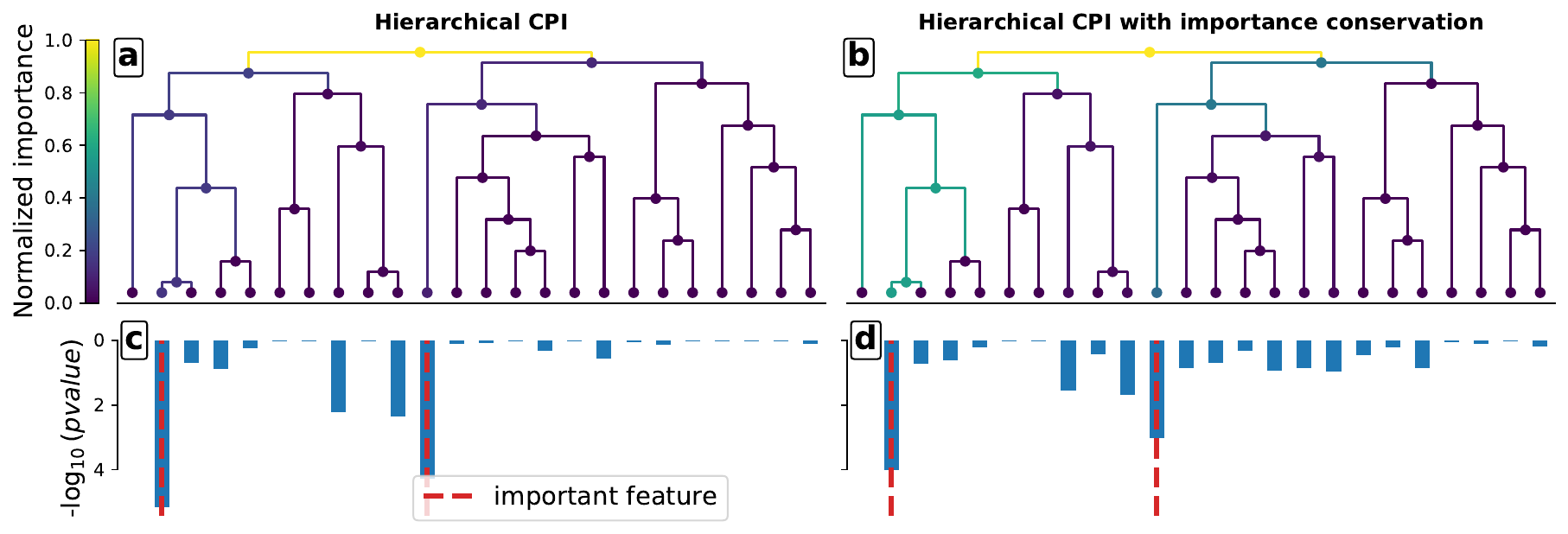}
\caption{
\textbf{Importance conservation prevents the importance from vanishing as the group size decreases in a high-correlation setting.} 
Example using simulated data with $n=300$ samples and $d=24$ variables.
The data is generated by blocks, each corresponding to an AR(1) with autocorrelation parameter
$\rho_{\text{max}}=0.95$.
\textbf{a} and \textbf{b} show the same dendrogram obtained through Ward's clustering.
Each node's color encodes the conditional importance of the variables it contains.
Without importance conservation, importance quickly vanishes down the tree.
\textbf{c} and \textbf{d} present the p-value distributions, demonstrating that both methods accurately rank important variables.
}
\label{fig:concept}
\end{figure} 
A common pitfall of conditional importance is that it vanishes under high correlation: For a parent node $P$ with two strongly correlated children nodes $L$ and $R$, then we have that $\psi_L^k + \psi_R^k \ll \psi_P^k$ for all $k$ in $[1..K]$. 
This effect is illustrated in \autoref{fig:concept} a and c.  
Hierarchical CPI can infer that $P$ is important, but will give little to no importance to $L$ and $R$ (and all downstream groups). 
%
To obtain a Shapley-like additive decomposition of the model fit, inspired by variance partitioning ideas \cite{lescroart2015fourier}, we introduce a transfer mechanism that ensures additivity at the node level, meaning that the importance of a parent node is split into the sum of the importance of its children.
This is meant to allow a more refined allocation of the importance budget compared to traditional clustering methods. 
For a node $R$, let $\tilde{\psi}_{R}^k$ be the corrected value for ${\psi}_{R}^k$ that ensures importance conservation through the hierarchical structure. 
Then, $\mathds{1}_{R}(\epsilon)$ is the indicator function equal to one when $\psi_{R} / \hat\sigma_R \geq \epsilon > 0$, where $\hat\sigma_R$ is the standard deviation of the right node importance estimated over the k-folds and zero otherwise. 
Importance conservation aims at satisfying the equation $\tilde{\psi}_P^k = \tilde{\psi}_{L}^k + \tilde{\psi}_{R}^k$.
This condition ensures that the importance of the top node, which measures the full model's importance, is allocated to its children nodes.
The allocation mechanism for the child node $L$ with sibling $R$ and parent $P$ proceeds as follows:
\begin{align}
\label{eq:imp_conserv}
\tilde{\psi}_{L}^k=
\begin{cases}
    \psi_{L}^k + \mathds{1}_{R}(\epsilon) \frac{\tilde{\psi}_P^k - \psi_{L}^k - \psi_{R}^k}{2} & \text{if } \mathds{1}_{L}(\epsilon) = 1\\
        \tilde{\psi}_P^k \frac{\psi_{L}^k}{\psi_{R}^k + \psi_{L}^k} (1- \mathds{1}_{R}(\epsilon)) + \mathds{1}_{R}(\epsilon) (\tilde{\psi}_P^k - \psi_{R}^k) & \text{if } \mathds{1}_{L}(\epsilon) = 0
\end{cases}
\end{align}
This equation distributes the parent's importance proportionally to the children's importance when both nodes' importance values are greater than a threshold $\epsilon$.
If the importance $\psi_L$ of node $L$ is smaller than $\epsilon$, it remains unchanged, avoiding false positives (not all children of an important parent are important).
When both children have sub-threshold importance, indicating that their correlation leads to mutual importance cancellation, the importance of the parent is allocated equally between the two nodes.  
\section{Results}

\subsection{Control of Family-Wise Error Rate on Simulated Data}
\label{subsec:simulation}
\begin{figure}[h!]
\includegraphics[width=\textwidth]{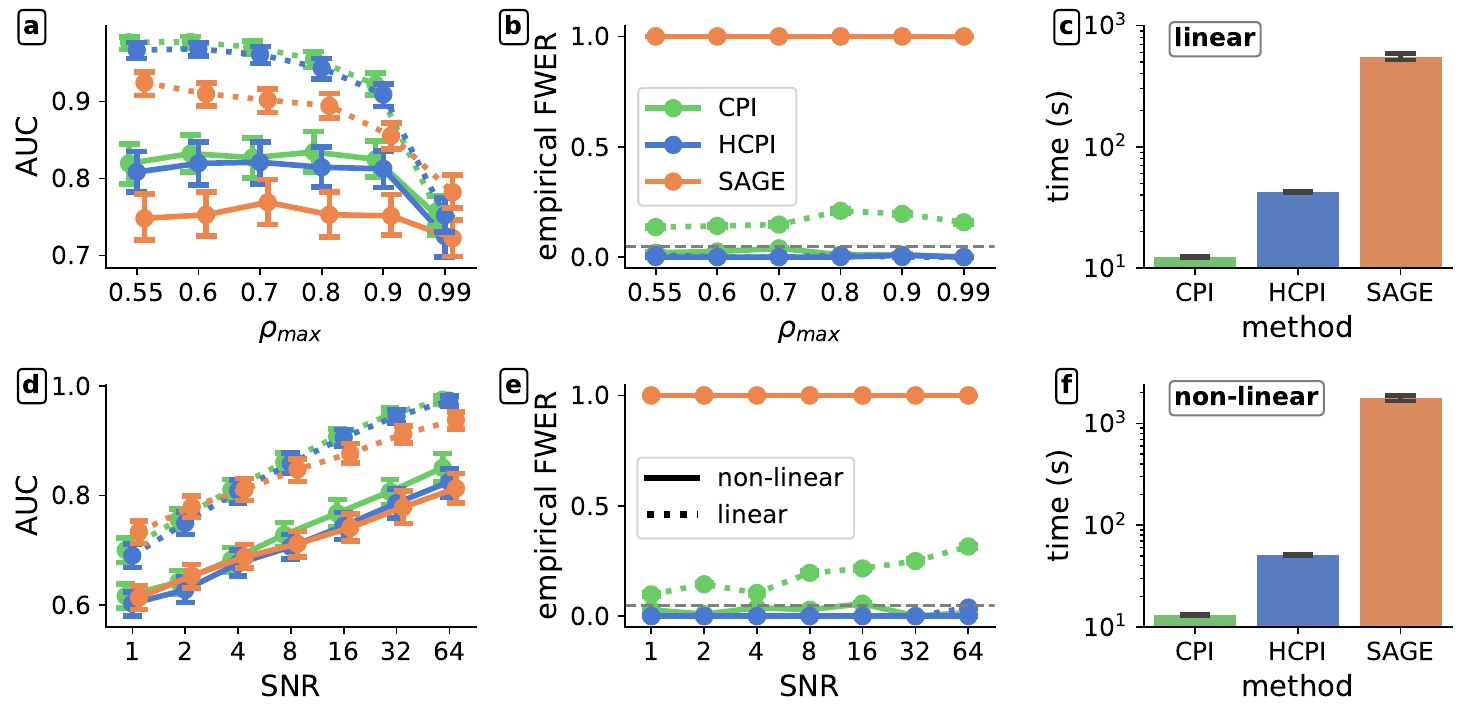}
\caption{
\textbf{Hierarchical CPI empirically controls the FWER wiht high statistical power.} 
Results from simulated data with $n=400$ samples and $p=124$ variables sampled from a normal distribution with block correlations described in \autoref{subsec:simulation}.
The results present a summary of 100 repetitions of the simulation. 
\textbf{a} and \textbf{d} present the AUC for important/non-important variable classification as a function of correlation and SNR. 
Error bars represent 95\% confidence interval.
\textbf{b} and \textbf{c} show the evolution of the FWER at level $\alpha=0.05$ with Bonferroni correction. %
The top row explores varying $\rho_{max}$ values at a fixed SNR=2, while the bottom row examines varying SNRs at a fixed $\rho_{max}=0.9$.
\textbf{c} and \textbf{f} shows the average computation time taken by each method over the simulations for the linear and non-linear scenario respectively.  
}
\label{fig:simulation}
\end{figure}
%
This experiment benchmarks the hierarchical-CPI approach described in Algorithm \autoref{alg:hcpi} with other state of the art variable importance methods on simulated data. 
The data is generated by blocks, each corresponding to an AR(1) with autocorrelation parameter
$\rho_{\text{max}}$.
The outcome is modeled using two different scenarios. 
The first is a linear model with additive noise $y =  X \beta+ \sigma_N \varepsilon$, represented with dotted lines in \autoref{fig:simulation}.
The support $S^* = \{j ; | ; \beta_j \neq 0\}$ is kept sparse, with $|S^*|$ set to either 5 or 10, and coefficients values sampled from the set $\beta_j \in \{-2, -1, 1, 2\}$ with uniform probability.
The second is a non linear scenario presented in \cite{chamma_2024_statistically}: $y=X_{j_1} + 2\log(1+ 2X_{j_2}^2 + (X_{j_3} +1)^2) + X_{j_4}X_{j_5} + \sigma_N\epsilon$.
The support $S^*=\{j_1, \cdots, j_5\}\in [\![ 0, p ]\!] ^5$ is randomly sampled at each simulation run.
In both cases, additive noise $\varepsilon \sim \mathcal{N}(0, I_n)$, controls the signal-to-noise ratio (SNR), defined as $SNR = ||y^*||_2^2 / \sigma_N^2 ||\epsilon||_2^2$, where $y^*$ is the noiseless outcome. 
The SNR is a simulation parameter.
In the experiments shown in \autoref{fig:simulation}, we used $p=124$ variables grouped into five blocks of correlated features with respective sizes 4, 8, 16, 32, and 64.
The number of samples was fixed at $n=400$ to match the dimensionality commonly found in medical imaging applications.
To solve the non-linear regression task, a multilayer perceptron (MLP) with 100 hidden units was used. 
It was trained using \textit{Adam} optimizer for 400 epochs with early stopping (patience 10 epochs). 
For the linear scenario, a Ridge-regularized model was used. 
Its regularization parameter was learned via nested cross-validation.
For CPI and HCPI, the loss used in \autoref{eq:CPI} is the root mean squared error.  

The top row (\textbf{a, b, c}) explores the influence of $\rho_{max}$ while the bottom row the influence of the SNR. 
We considered two metrics. 
First, the Area Under the Receiver Operating Characteristic Curve (AUC): This compares the predicted importance to the true importance.
It aims to assess each method's ability to recover the true support.
Second, the FWER: This measures the probability of making at least one false discovery, estimated over 100 simulation repetitions.
We compare three methods: CPI, Hierarchical-CPI (HCPI) and SAGE. 
For CPI and HCPI, the predicted importance correspond to $1-p$-value and the estimated support is $\hat S_\alpha = \{ j \; | \; p_j \leq \alpha\}$, we considered a level $\alpha=0.05$. 
Regarding SAGE, we used a publicly available implementation\footnote{https://github.com/iancovert/sage} which provides estimated standard deviation from which 95\% confidence interval were derived.
The estimated support for SAGE consisted of variables for which the confidence interval did not include 0.

As shown in \autoref{fig:simulation}, the hierarchical approach effectively controls the FWER even in challenging simulation settings, e.g. with very high correlation or low SNR.
This can be attributed to the hierarchical adjustment, described in \autoref{eq:hierarchical_adjust}, that bounds a node's p-value below using the  p-value of its parent.
As shown in subpannels \textbf{a} and \textbf{d} of the \autoref{fig:simulation}, this additional control does not decrease the power of the method when compared to CPI. 
While the exploration of the nodes entails an additional computation cost, it remains of the same order as CPI, which is a fast method. 
Indeed, for a hierarchical clustering of $p$ variables, the total number of nodes is $2p -1$ which makes the computation scale linearly with the dimension instead of the exponential explosion  inherent to SAGE \cite{covert_2020_sage}. 
This fact is illustrated on the panels \textbf{c} and \textbf{f} of \autoref{fig:simulation}, where the logarithmic axis illustrates the untractable computation time of SAGE. 
In the non-linear case where a neural network is used, this trend ibecomes even more pronounced. 

\subsection{Hierarchical CPI Identifies Characteristic Markers of AD}
\begin{figure}[h!]
\includegraphics[width=\textwidth]{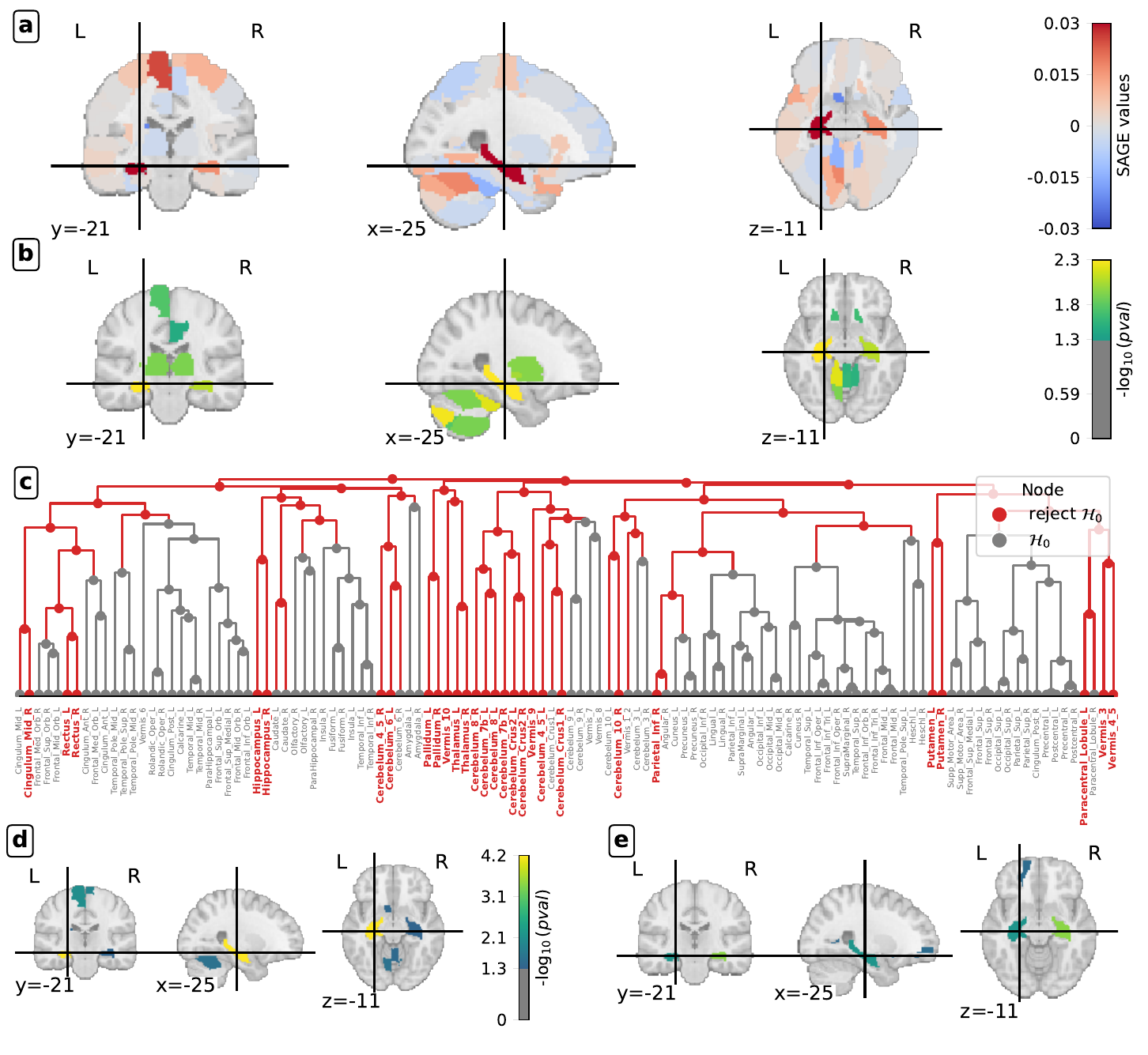}
\caption{
\textbf{Hierarchical CPI discovers groups of characteristic markers of AD progression.}  
Importance obtained for classifying AD and MCI subjects from the ADNI dataset using a support vector classifier and grey matter densities in the 116 AAL regions.
\textbf{a} Signed importance values obtained using the SAGE method. 
Only values for which the 95\% CI did not overlap with 0 are reported.
\textbf{b} Important regions identified by hierarchical CPI at the $\alpha=0.05$ level.
\textbf{c} Dendrogram derived from the hierarchical CPI approach.
The important nodes ($\alpha=0.05$) of the tree learned by hierarchical clustering are colored in red.
Regions identified as important are labeled in red.
\textbf{d} Important regions identified by HCPI for classifying patients with AD versus controls.
\textbf{e} Important regions for classifying controls versus patients with MCI.
}
\label{fig:ADNI_sage}
\end{figure}

This study explores image-based diagnosis using the ADNI dataset \cite{petersen_2010_alzheimer}. 
Cohort selection was based on the availability of T1-weighted images, similarly to \cite{samper_2018_reproducible}. 
A total of 1616 patients were included: 760 controls (CN), 529 diagnosed with Mild Cognitive Impairment (MCI), and 327 with AD.
Gray Matter (GM) density maps were computed using the \textit{sMRIPrep} pipeline \cite{esteban_2021_smriprep}, which is part of the widely used \textit{fMRIPrep} pipeline \cite{esteban_2019_fmriprep}. 
The mean GM densities were extracted from 116 Regions of Interest (ROIs) defined by the Automated Anatomical Labeling Atlas 3 \cite{rolls_2020_automated} using \textit{Nilearn} \cite{Nilearn} and used as features for classification tasks.
The high correlation between ROIs (Pearson correlation ranging from 0.32 to 0.94) and the interest in locating the pathology's impact precisely motivated the proposed approach. 
The methodology for data processing, model optimization, and hyper-parameter tuning followed \cite{samper_2018_reproducible}, to ensure result reproducibility.
We used a Support Vector Classifier (SVC) as implemented in \textit{libsvm} on GM densities in the 116 regions of the AAL atlas.
For the HCPI method, importance was measured by the hinge loss difference in \autoref{eq:CPI}. 
All results are reported using 10-fold cross-validation with stratification. 
Hyper-parameter tuning was performed using a nested cross-validation loop to avoid information leakage from the test set \cite{pedregosa_2011_scikit}.
 Using a linear or \textit{rbf} kernel led to similar predictive performance and importance scores. 
The results were reported for a linear kernel. 
Three classification tasks were considered: MCI vs. AD, AD vs. CN, and MCI vs. CN. 
The average AUC on the test set over 10 folds were 0.78, 0.93, and 0.74, respectively, which is consistent with existing literature \cite{samper_2018_reproducible}.
\autoref{fig:ADNI_sage} presents the importance maps at the individual feature resolution, computed using the SAGE method (\textbf{a}) and the proposed hierarchical-CPI (\textbf{b}) for classifying patients with AD and MCI.
Similar to the previous section, for both methods, results are reported at a significance threshold of $\alpha=0.05$.
While SAGE identifies more regions as important, it is likely that many of these are only marginally, not conditionally, associated with the outcome. 
By contrast, HCPI identifies fewer regions that summarize the specific markers of the disease. 
Importantly, all regions identified by HCPI are also identified by SAGE, revealing a form of consistency.
These regions include the hippocampi, and the orbitofrontal cortex (rectus in the AAL) which have been extensively described in the literature as areas where atrophy is substantial \cite{knopman_2021_alzheimer, van_2000_orbitofrontal}.
The putamen and thalamus were also identified as predictive, consistently with published work documenting the association between AD and decreased global GM in these regions \cite{de_2008_strongly}.
%
%
%
Moreover, HCPI allows to learn clusters of predictive variables with varying resolutions,
This is depicted in \textbf{c}, which presents the dendrogram learned by agglomerative clustering, with nodes having a p-value below the significance threshold highlighted in red.
This information complements panel \textbf{b} by highlighting the importance of selected subgroups.
%
%
For instance, the node including (\texttt{Caudate\_L, Caudate\_R}) is important whereas individual variables are not, due to the high correlation between these two regions, (Pearson correlation of 0.84) leading to a cancellation of their conditional importance.
In addition to these results \autoref{fig:ADNI_sage} presents the importance map for two additional tasks: AD vs CN (\textbf{d}) and MCI vs CN (\textbf{e}). 
Similar to \autoref{fig:ADNI_sage}, the HCPI approach identifies hallmarks of AD pathology, such as the gray matter density in the hippocampi, which has been extensively described in the literature \cite{knopman_2021_alzheimer}.

\subsection{Importance Conservation Enables Inference on Highly Correlated EEG Data}
\begin{figure}[ht]
\includegraphics[width=\textwidth]{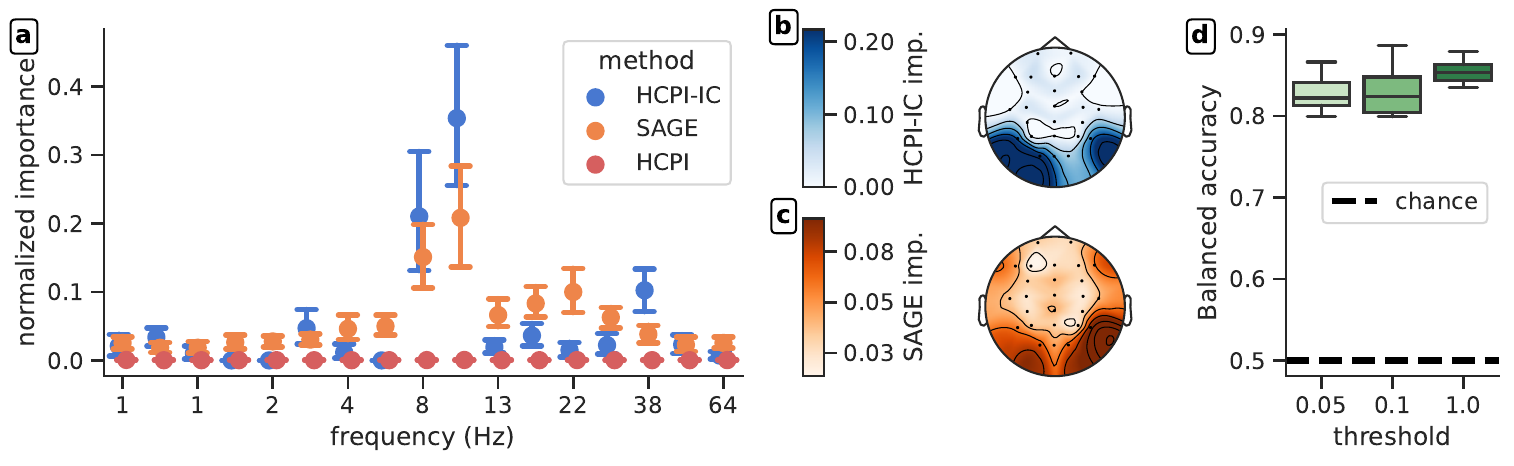}
\caption{\textbf{
Importance conservation enables variable importance inference in high-dimensional settings with very high correlation.}
Comparison of the variable importance obtained using hierarchical CPI with importance conservation (HCPI-IC), without (HCPI) and using SAGE. 
Absolute SAGE values are represented for readability.
\textbf{a} The distribution of importance over frequencies for significant variables at the $\alpha=0.05$ threshold.
Each point represents the sum of important channels at a given frequency.
\textbf{b, c} The distribution of importance over scalp topography for significant variables at the $\alpha=0.05$ threshold.
At a given channel, the sum is taken over all important frequencies.
\textbf{d} Presents the performance of sub-models that use only a fraction of variables identified as significant at a level $\alpha$, with $\alpha=1$ corresponding to the full model.
The boxes represent the distribution over 10-fold cross-validation. 
The dotted line represents the chance level.
} \label{fig:eeg}
\end{figure}
The importance conservation approach was then applied to the EEG data from the TDBRAIN dataset to characterize the known Berger effect \cite{van_2022_two}.
EEG data is known to exhibit high correlation due to latent sources spreading across the scalp as a result of field spreads.
Resting state EEG were acquired from 1234 healthy subjects who were asked to open and close there eyes during labeled periods.
The dataset was preprocessed using the pipeline presented in \cite{bomatter_2023_machine} in order to remove artifacts generated by non-brain sources. 
Specifically, independant component analysis was applied in order to remove eye-movement artifacts which would make the task trivial. 
The power at each of the 26 electrodes was computed across 17 logarithmically spaced frequency bands, ranging from 1 to 64 Hz, using Morlet wavelets.
The 442 resulting features present a very high correlation structure, with minimum Pearson correlation above 0.9. 
We considered the task of classifying the eyes status (closed vs open) using a pipeline consisting of a logarithm computation followed by logistic regression. 
Similarly to the previous section, 10-fold nested cross validation was used. 
The loss used to measure the conditional importance in \autoref{eq:CPI} is the cross-entropy loss.
The transmission threshold $\epsilon$ was set to the 95\% quantile of the normal distribution. 

As illustrated by \autoref{fig:eeg} \textbf{a}, the high correlation in the data causes the conditional importance to vanish, resulting in no significant discoveries at a threshold of $\alpha=0.05$ (HCPI in red).
To mitigate this issue and increase statistical power, the importance conservation mechanism (HCPI-IC, in blue) introduced in \autoref{subsec:importance_conservation} is applied.
The importance however remains more focal than with SAGE (orange), which spreads the importance over a wide range of frequencies.
Panels \textbf{a}, \textbf{b} and \textbf{c} show the distribution of importance in the frequency and sensor spaces among significant variables at a threshold of $\alpha=0.05$.
The pattern observed, with most of the importance located around 10Hz at occipital electrodes, corresponds to the well-studied Berger effect, characterized by increased occipital activity in the alpha-band \cite{hohaia_2022_occipital}. 
The pattern is precisely identified by HCPI-IC, while SAGE distributes more broadly the importance over electrodes and frequencies. 
Finally, panel \textbf{d} demonstrates the performance of submodels using only significant variables at thresholds  $\alpha<0.05$, $\alpha<0.1$, and 1 (all variables).
It reveals that at the strictest threshold ($\alpha = 0.05$), the procedure selects 55 variables out of 442, recovering 96\% of the full model's performance.
\section{Discussion and conclusion}
%
The HCPI approach was motivated by the challenge of making inference on high-dimensional and highly correlated neuroimaging data.
To achieve this, it frames the inference problem as the discovery of groups of variables that are jointly predictive of the outcome. 
It can recover statistical control in high correlations regimes where standard methods lose consistency.
%
Statistical guarantees were empirically validated on simulated data, and the method was applied to two neuroimaging modalities using publicly available datasets.
Its effectiveness was demonstrated on both classification and regression tasks. 
By successfully testing different tasks, models and losses, we proved the practical utility of this model-agnostic approach.
HCPI flexibility exceeds that of existing methods relying on linear models or Lasso-based knockoffs \cite{mandozzi_2016_hierarchical, chevalier_2021_decoding, binh_2019_ecko}.

By exploring subgroups within a learned hierarchical tree, HCPI balances precision and statistical power, allowing the identification of groups that are important, even if none of the individual variables is significant. 
It thus identifies the highest resolution at which importance can be narrowed down, without needing to optimize clustering parameters \cite{chevalier_2021_decoding, chamma_2024_variable, binh_2019_ecko}.
This information can easily be visualized using a dendrogram.
Unlike additive methods like SAGE, which exhaustively explore all subgroups including a variable, eliminating many costly and useless evaluations.
It provides a FWER control, thus contrasting with SAGE's known lack of type-1 error control \cite{verdinelli_2024_feature}. 
Moreover, it remains tractable, requiring only $2p-1$ importance evaluations.
Finally, the importance conservation mechanism introduced in \autoref{eq:imp_conserv} mitigates power loss due to vanishing importance, as shown with highly correlated EEG data features.

When tested on MRI and EEG data, our method identified biologically well-studied features consistent with existing literature such as hallmarks of AD in MRI and the Berger effect in EEG. 
Lastly, while we used Conditional Permutation Importance, because known to be more stable and efficient than LOCO, the latter could also be used as a drop-in replacement for estimating importance. 
\paragraph{Limitations:}
We have explored scenarios where a single agglomerative clustering is performed, demonstrating that it can yield insightful learnings about data structure, with clusters of variables being predictive even if individual variables are not. 
However, this step can introduce randomness. 
For applications not requiring hierarchical tree learning, like voxel-level or applications to raw images—it may be beneficial to repeat the procedure and leverage p-value aggregation strategies or e-values \cite{meinshausen_2009_p, vovk_2021_values}.
Future work could involve repeated agglomerative clustering on random data subsets, followed by aggregation to improve robustness.
%
Another limitation concerns the theoretical guarantees of the importance conservation approach.
While we empirically observed a type-1 error rate much lower than SAGE, a formal result remains to be established. 
The transmission threshold $\epsilon$ is critical in this context:
it defines a threshold below which the importance of a parent node becomes indivisible because the contributions of its children nodes cancel each other out.
We conjecture that it is possible to obtain guarantees for type-1 error control outside a neighborhood (which size depends on $\epsilon$) around the support.

%

%
%
The algorithm builds on open-source software available on Github\footnote{https://github.com/mind-inria/hidimstat}.
\begin{credits}
\subsubsection{\ackname} This esearch has received funding from the H2020 Research Infrastructures Grant EBRAIN-Health 101058516 and the VITE ANR-23-CE23-0016 and  PEPR Santé numérique, Brain health Trajectories ANR-22-PESN-0012 projects.
\end{credits}

%
%
\printbibliography
\end{document}